\documentclass[11pt]{article}

\usepackage[final]{acl}
\usepackage{times}
\usepackage{latexsym}
\usepackage[T1]{fontenc}
\usepackage[utf8]{inputenc}
\usepackage{microtype}
\usepackage{inconsolata}
\usepackage{xspace}
\usepackage{graphicx}
\usepackage{subcaption}
\usepackage{amsmath}
\usepackage{amssymb}
\usepackage{amsthm}
\usepackage{algorithm}
\usepackage{algpseudocode}
\usepackage{booktabs}
\usepackage{colortbl}
\usepackage{tabularx}
\usepackage{tcolorbox}
\tcbuselibrary{breakable}
\definecolor{oursgray}{gray}{0.94}
\definecolor{ourstableblue}{RGB}{212,243,253}
\definecolor{stgray}{gray}{0.55}
\definecolor{pacPurple}{RGB}{116,62,173}
\definecolor{pacLearnBlue}{RGB}{0,92,175}
\definecolor{pacConvertOrange}{RGB}{183,95,0}
\newcommand{\pac}{{\textbf{\texttt{PAC}}}\xspace}
\newcommand{\stcell}[1]{\textcolor{stgray}{#1}}
\newcommand{\stddev}[1]{\nobreak\hspace{0.15em}{$\pm\,#1$}}
\newcommand{\LearnFactor}[1]{\textcolor{pacLearnBlue}{\mathbf{Learnability}_{#1}}}
\newcommand{\ConvertFactor}[1]{\textcolor{pacConvertOrange}{\mathbf{Convert}_{#1}}}
\hypersetup{
  pdftitle={PAC: Progress-Augmented Advantage Curriculum for Multi-Task Reinforcement Learning of LLMs},
  pdfauthor={Yuanqiang Yu, Yanzhao Zheng, Zhentao Zhang, Tianze Xu, Chao Ma, Jihuai Zhu, Jiashun Liu, Xinle Deng, Baohua Dong, Hangcheng Zhu, Ruohui Huang}
}
\newtcolorbox{promptbox}[1]{
  colback=gray!3!white,
  colframe=gray!45!black,
  colbacktitle=gray!12!white,
  coltitle=black,
  fonttitle=\bfseries\small,
  title={#1},
  before upper={\raggedright},
  boxrule=0.35pt,
  arc=1.5pt,
  left=1.8mm,
  right=1.8mm,
  top=1mm,
  bottom=1mm,
  before skip=0.7\baselineskip,
  after skip=0.7\baselineskip,
  breakable
}
\newtheorem{theorem}{Theorem}[section]

\title{PAC: Progress-Augmented Advantage Curriculum for Multi-Task Reinforcement Learning of LLMs}

\author{
  \textbf{Yuanqiang Yu, Yanzhao Zheng, Zhentao Zhang, Tianze Xu, Chao Ma, Jihuai Zhu}\\[-1pt]
  \textbf{Jiashun Liu, Xinle Deng, Baohua Dong\textsuperscript{*}, Hangcheng Zhu, Ruohui Huang}\\[2pt]
  Alibaba Group, Hangzhou, China\\
  \texttt{\{yuyuanqiang.yyq,zhengyanzhao.zyz\}@alibaba-inc.com}\\[-1pt]
  \texttt{\{zhangzhentao.zzt,xutianze.xtz,mc524716\}@alibaba-inc.com}\\[-1pt]
  \texttt{\{zhujihuai.zjh,baohua.dbh\}@alibaba-inc.com}\\[-1pt]
  \texttt{\{linran.lr09,wentong\}@alibaba-inc.com}
}
\begin{document}
\maketitle
\begingroup
\renewcommand{\thefootnote}{\fnsymbol{footnote}}
\footnotetext[1]{Corresponding author.}
\endgroup
\begin{abstract}
Reinforcement learning (RL) is used to improve the reasoning abilities of LLMs, while training data span heterogeneous tasks.
However, most RL post-training pipelines rely on fixed or manually designed task mixtures, even though task usefulness changes as training progresses.
Online curriculum methods often define learnability by update magnitude, ignoring whether the update translates into reward gains, which can misallocate rollout budget toward tasks with large but ineffective updates.
We propose \pac, a Progress-Augmented Advantage Curriculum for multi-task RL of LLMs that combines two task-level signals: advantage-derived learnability, which measures the magnitude of the policy update a task can induce, and recent reward gains, which show whether those updates have improved task performance.
A Bayesian Thompson Sampling controller uses these signals to allocate rollouts across tasks during GRPO training.
We evaluate \pac under two settings: a multi-level reasoning setting and a multi-domain reasoning setting.
\pac improves sample efficiency and final performance: it reaches comparable validation scores with fewer rollout steps and achieves higher final averages than random sampling and advantage-based curriculum baselines in both settings.
These results show that jointly tracking advantage signals and actual reward gains yields an effective online curriculum for LLM post-training.
\end{abstract}

\section{Introduction}

\begin{figure*}[t!]
\centering
\includegraphics[width=\textwidth]{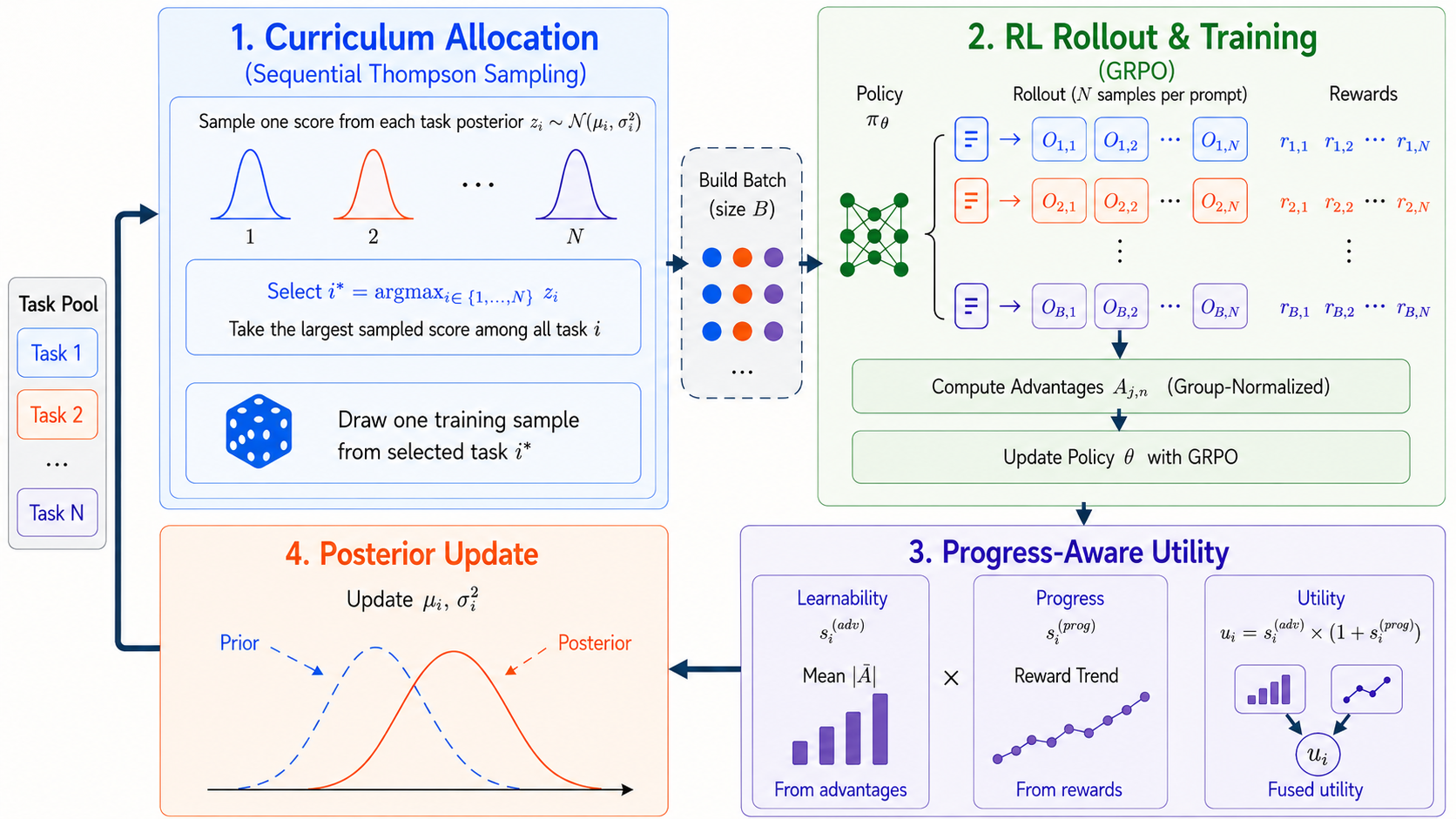}
\caption{\textbf{Illustration of \pac for multi-task RL post-training.} At each iteration, a Thompson Sampling controller allocates rollout budget across task arms to construct a mixed batch. The sampled responses are evaluated with verifiable reward functions, from which group-normalized advantages are estimated for the GRPO update. \pac fuses advantage-derived learnability with reward-derived progress into utility observations, updates task posteriors, and uses them to allocate the next rollout batch.}

\label{fig:pac_overview}
\end{figure*}

In practical LLM post-training, reinforcement learning (RL) is increasingly applied to heterogeneous task mixtures rather than to a single fixed task. Recent LLM training reports, including DeepSeek-R1 \citep{guo2025deepseekr1}, Qwen3 \citep{yang2025qwen3}, and Tulu 3 \citep{lambert2024tulu3}, describe RL post-training over mixtures of reasoning, coding, STEM, instruction following, multilingual, or multimodal data. However, these reports do not focus on adapting the task mixture online as the policy changes. This gap raises a central curriculum question: how should a trainer allocate limited rollout budget across tasks during multi-task RL post-training?

This question is especially consequential for Group Relative Policy Optimization (GRPO), a widely used RL post-training algorithm with verifiable rewards. Each update samples multiple responses to the same prompt, scores them, and derives relative advantages from the resulting rewards \citep{shao2024deepseekmath}, making rollouts a major training cost. Recent GRPO variants improve the optimizer through revised clipping, sampling, loss design, bias correction, or sequence-level policy ratios \citep{yu2025dapo,liu2025understanding,zheng2025gspo}. These optimizer-level advances are complementary to our focus: they generally assume a fixed task distribution and do not adapt the task mixture itself.

The core difficulty is that task utility is non-stationary during training. Easy tasks may stop providing useful learning signal once performance saturates; hard tasks may provide little actionable feedback early in training; and ability-dependent tasks may become useful only after their prerequisite abilities are learned. Uniform sampling ignores these dynamics, while hand-designed or difficulty-based curricula require difficulty labels, filtering rules, or fixed schedules that often transfer poorly across settings \citep{bengio2009curriculum,narvekar2020curriculum,parashar2026curriculum,shi2025efficient,song2025fastcurl}.

Recent online data-selection and curriculum methods adapt using learnability, difficulty, posterior uncertainty, gradient-based signals, or influence estimates \citep{foster2025learning,bae2025online,qu2025canprompt,shen2025bots,zeng2025cures,zhu2025dataefficient}. While these signals enable online adaptation, they do not directly measure the allocation target: whether additional rollouts on a task translate into reward improvement. A task can be uncertain, difficult, or update-inducing without producing reward gains under the current policy. In LLM post-training, recent bandit curricula often use absolute advantage as the bandit reward \citep{chen2025selfevolving,wang2025dump}. However, the absolute advantage primarily reflects the magnitude of the policy update a task can induce; it does not indicate whether the update translates into higher reward. In heterogeneous task mixtures, this distinction matters: high-variance tasks can be overprioritized even when their expected rewards are not improving, while tasks with moderate advantages but steadily improving rewards may be undersampled.

To address this limitation, we propose \pac, a Progress-Augmented Advantage Curriculum for multi-task RL post-training of LLMs. As shown in Figure~\ref{fig:pac_overview}, \pac estimates task utility from two complementary online signals: advantage-derived learnability proxies the update potential available from a task, while reward-derived progress measures whether recent updates translate into reward gains. \pac fuses the two signals into utility observations and uses a Bayesian Thompson Sampling controller to maintain task-level beliefs and allocate future rollouts under uncertainty. Our contributions are threefold. First, we highlight a limitation of advantage-only task utility for online curricula in heterogeneous LLM RL mixtures. Second, we introduce \pac, a progress-augmented Thompson Sampling curriculum for adaptive rollout allocation. Third, we validate \pac in multi-level and multi-domain reasoning settings, where it improves sample efficiency and final performance over curriculum baselines.

\section{Related Work}
\begingroup
\setlength{\parindent}{0pt}
\textbf{Curriculum Learning for SFT.}
Curriculum learning studies how ordering or weighting examples and tasks affects optimization and generalization \citep{bengio2009curriculum,narvekar2020curriculum}.
In supervised fine-tuning (SFT) of LLMs, curricula usually rely on static mixtures or precomputed properties such as task balance, data quality, difficulty, and instruction diversity \citep{sanh2022multitask,wei2022finetuned,wang2022supernaturalinstructions,wang-etal-2023-self-instruct,pmlr-v202-longpre23a,zelikman2022star,xu2023wizardlm,luo2023wizardmath}.
Because targets are fixed and task utility is largely determined before training, these methods do not address the online allocation problem in RL post-training, where task usefulness shifts as the policy evolves.

\textbf{Curriculum Learning for RL.}
RL curricula adapt tasks or environments using episodic return, learning progress, or estimated future learning potential \citep{ijcai2020p671,graves2017automated,matiisen2020teacher}.
Prioritized Level Replay revisits levels with high learning potential \citep{pmlr-v139-jiang21b}; T3S combines task-specific feature selection with a scheduler, whereas Distral focuses on knowledge transfer through a shared distilled policy \citep{yu2023t3s,teh2017distral}.
Most closely, Hard Tasks First introduces Scheduled Multi-Task Training (SMT), which prioritizes tasks according to difficulty estimates derived from returns and policy entropy and periodically resets selected network parameters \citep{pmlr-v235-cho24d}.
These RL curriculum methods are complementary to ours but do not directly target online allocation over LLM task mixtures.

\textbf{Curriculum Learning for LLM Post-Training.}
Recent RLHF and reasoning-oriented systems train on heterogeneous mixtures with PPO or GRPO, but typically rely on fixed or manually designed task mixtures rather than an explicit online allocation rule \citep{openai2024o1systemcard,guo2025deepseekr1,lambert2024tulu3}.
Existing post-training curricula either select frontier or intermediate-difficulty prompts, replay online data, or train samplers from advantage-based bandit rewards and Bayesian online task selection \citep{foster2025learning,bae2025online,sun2025improving,muhtar2026complementaryreinforcementlearning,chen2025selfevolving,shen2025bots}.
\pac is closest to advantage-based online allocation, where advantage magnitude provides a proxy for update strength but ignores whether the update yields reward improvement.
To address this gap, \pac combines advantage-derived learnability with reward-derived progress in a Thompson Sampling controller for rollout allocation.
\par\endgroup

\section{\pac: Progress-Augmented Advantage Curriculum}
\label{sec:method}

We describe \pac as an online controller for multi-task GRPO post-training.
The method treats task selection as a non-stationary allocation problem, where reward and advantage signals from recent rollouts update beliefs about which tasks are useful for the current policy.
We first define the multi-task training setting and the allocation objective.
The following sections then introduce the task utility, the Bayesian curriculum controller, and the full training algorithm.

\subsection{Problem Formulation}
\label{sec:problem_formulation}

We consider RL post-training of an LLM policy $\pi_{\theta}$ over $N$ task datasets $\mathcal{D}=\{\mathcal{D}_1,\dots,\mathcal{D}_N\}$.
At training step $t$, the curriculum controller determines the allocation of training samples over the task datasets.
We represent this allocation by a task-sampling distribution $\boldsymbol{p}^{(t)}=(p_1^{(t)},\dots,p_N^{(t)})$ over the $N$ tasks, with $p_i^{(t)}\ge 0$ and $\sum_{i=1}^{N} p_i^{(t)}=1$.
Each component $p_i^{(t)}$ denotes the probability that one training sample is assigned to task $i$; in \pac, this allocation is induced implicitly by sequential Thompson Sampling.

For a selected prompt $x$, the current policy generates $G$ rollouts $\{o_j\}_{j=1}^{G} \sim \pi_{\theta}(\cdot \mid x)$, and each rollout $o_j$ is scored by a verifiable reward $r_j$.
Under an allocation $\boldsymbol{p}^{(t)}$, a batch of size $B$ is formed by repeatedly sampling a task $i \sim \boldsymbol{p}^{(t)}$ and then sampling a prompt $x \sim \mathcal{D}_{i}$ from the selected task.
Thus, a single batch may contain training samples from several tasks.
For each rollout $o_j$, we compute the group-normalized advantage as
\[
\hat{A}_j
=
\frac{r_j - \operatorname{mean}(\{r_1,\ldots,r_G\})}
{\operatorname{std}(\{r_1,\ldots,r_G\}) + \epsilon},
\]
where the mean and standard deviation are computed over the $G$ rewards sampled for the same prompt, and $\epsilon > 0$ is a small stability constant.
Following standard policy-gradient methods, the update term for a prompt $x$ is
\begin{equation}
\begin{aligned}
g_x(\theta)
&=
\mathbb{E}_{o_{1:G}\sim \pi_{\theta}(\cdot \mid x)} \\
&\quad\!\left[
\frac{1}{G}\sum_{j=1}^{G}\hat{A}_j\,\nabla_{\theta}\log \pi_{\theta}(o_j \mid x)
\right].
\end{aligned}
\label{eq:prompt_update}
\end{equation}
Averaging $g_x(\theta)$ over prompts in task $i$ yields the task-conditioned gradient $g_i(\theta)=\mathbb{E}_{x\sim \mathcal{D}_i}[g_x(\theta)]$, and the mixed-batch GRPO gradient is
\begin{equation}
\nabla_{\theta}\mathcal{J}_{t}(\theta)
\;=\;
\sum_{i=1}^{N} p_i^{(t)} g_i(\theta).
\label{eq:multi_task_grpo}
\end{equation}
The goal is to update $\theta$ while adapting the task allocation, thereby spending the limited rollout budget on tasks that are most useful for the current policy.

We model this allocation problem as a non-stationary multi-armed bandit.
Each task is an arm, and pulling arm $i$ assigns one training sample to $\mathcal{D}_i$.
Arm utilities change as the policy is updated: a task may become learnable, saturate, or remain too difficult for the current policy.
\pac uses this feedback loop to adapt the curriculum.
GRPO rollouts produce rewards and advantages; these signals are summarized as task-level utilities, and the Bayesian controller updates its posterior before forming the next rollout batch.

\subsection{Online Task Utility Estimation}
\label{sec:reward_construction}

The ideal curriculum would allocate the next training sample to the task that yields the largest short-horizon improvement.
This marginal gain cannot be measured directly online, because a batch reveals feedback only for the tasks it contains and does not expose cross-task transfer.
\pac therefore approximates the latent marginal value of each arm using observable proxies; transfer effects are reflected later in the reward histories of affected tasks.

Let $\mathcal{V}_i(\theta)$ be the expected reward of task $i$ under $\pi_{\theta}$, and let $\Delta \theta_i^{(t)}$ be the marginal GRPO update induced by assigning one training sample to task $i$ at step $t$.
We define arm utility as the expected short-horizon gain in the task-specific reward objective.
A first-order Taylor expansion around $\theta_t$ gives
\begin{equation}
\begin{aligned}
u_i^{\star (t)}
&=
\mathbb{E}\!\left[
\mathcal{V}_i(\theta_t + \Delta \theta_i^{(t)})
-
\mathcal{V}_i(\theta_t)
\,\middle|\, x\sim\mathcal{D}_i
\right] \\
&\approx
\mathbb{E}\!\Big[
\nabla_{\theta}\mathcal{V}_i(\theta_t)^{\top}
\Delta \theta_i^{(t)}
\,\Big|\, x\sim\mathcal{D}_i
\Big].
\end{aligned}
\label{eq:latent_utility}
\end{equation}
Decomposing $\Delta \theta_i^{(t)} = \|\Delta \theta_i^{(t)}\|_2 \hat{\Delta \theta}_i^{(t)}$ into magnitude and direction separates the inner product into two interpretable factors.
The directional term $\nabla_{\theta}\mathcal{V}_i(\theta_t)^{\top}\hat{\Delta \theta}_i^{(t)}$ measures reward improvement per unit update and motivates the conversion factor $\ConvertFactor{i}$.
The magnitude term $\|\Delta \theta_i^{(t)}\|_2$ captures the size of the task-induced update and motivates the learnability factor $\LearnFactor{i}$.
Accordingly,
\begin{equation}
u_i^{(t)}
\;\propto\;
\underbrace{\mathbb{E}\!\left[\nabla_{\theta}\mathcal{V}_i(\theta_t)^{\top}\hat{\Delta \theta}_i^{(t)}\right]}_{\displaystyle \ConvertFactor{i}}
\;\cdot\;
\underbrace{\mathbb{E}\!\left[\|\Delta \theta_i^{(t)}\|_2\right]}_{\displaystyle \LearnFactor{i}} .
\label{eq:utility_factorization}
\end{equation}
This factorization motivates a utility that combines update magnitude with evidence that those updates translate into reward gains.
Because neither factor is directly observable online, we approximate each from signals already produced during training: $\LearnFactor{i}$ from advantage magnitudes (Section~\ref{sec:absolute_advantage}), and $\ConvertFactor{i}$ from recent reward gains over a local window (Section~\ref{sec:reward_trend}).

\subsubsection{Advantage-Derived Learnability}
\label{sec:absolute_advantage}

The GRPO policy-gradient loss is a sum of policy-gradient terms weighted by group-normalized advantages.
Appendix~\ref{app:advantage_learnability} gives the supporting bound: when the log-probability gradient norms $\|\nabla_{\theta}\log \pi_{\theta}(o_j\mid x)\|$ vary smoothly across tasks, differences in update magnitude are largely explained by differences in advantage magnitude.
Prompts whose sampled responses receive clearly separated rewards provide stronger learning signals than prompts whose responses receive nearly identical rewards.
At training step $t$, let $\mathcal{H}_t=\{t-W,\ldots,t-1\}$ denote the history window, and let $\mathcal{X}_i^{(t')}$ be the prompts from task $i$ at historical step $t'\in\mathcal{H}_t$.
For each task $i$, we estimate learnability by first computing a step-level mean absolute advantage and then averaging it over this window:
\begin{equation}
\begin{aligned}
a_i^{(t')}
&=
\frac{1}{|\mathcal{X}_i^{(t')}|}
\sum_{x\in \mathcal{X}_i^{(t')}}
\frac{1}{G}\sum_{j=1}^{G} \left| \hat{A}_{t',x,j} \right|, \\
s_i^{(\mathrm{adv})}
&=
\frac{1}{W}
\sum_{t'\in\mathcal{H}_t} a_i^{(t')} .
\end{aligned}
\label{eq:adv_learnability}
\end{equation}
where $\hat{A}_{t',x,j}$ is the group-normalized advantage of rollout $j$ for prompt $x$ at step $t'$.
Following Eq.~\eqref{eq:utility_factorization}, we use $s_i^{(\mathrm{adv})}$ to estimate $\LearnFactor{i}$, which reflects the advantage-driven update that task $i$ can still provide.

\subsubsection{Reward-Derived Progress}
\label{sec:reward_trend}

Advantage magnitude alone does not show whether the policy is improving on a task: a task can induce large GRPO updates even when its task-level expected reward shows no measurable upward trend.
We therefore pair advantage-derived learnability with a signal that tracks recent reward gains.
At the current controller step $t$, this improvement cannot be estimated reliably from a single mixed batch, since task-level reward averages are noisy and some tasks may receive only a few samples.
\pac therefore looks back over the previous $W$ training steps and estimates whether each task's mean reward has been increasing.

Using the same history window $\mathcal{H}_t$, let $\bar{r}_i^{(t')}$ be the step-level mean reward of task $i$ at historical step $t'\in\mathcal{H}_t$, where the mean is taken over the per-rollout rewards $r_j$ defined in Section~\ref{sec:problem_formulation}.
To estimate recent progress, we fit a linear regression model via least squares of the form
\begin{equation}
\bar{r}_i(\tau_{t'}) = b_i + k_i \tau_{t'},
\label{eq:reward_trend}
\end{equation}
where $\tau_{t'}$ is the position of historical step $t'$ normalized to $[0,1]$ within the window.
Let $\hat{k}_i$ be the fitted slope.
We normalize the fitted slopes across tasks:
\begin{equation}
s_i^{(\mathrm{prog})} =
\dfrac{\hat{k}_i}{\sum_{j=1}^{N} |\hat{k}_j| + \epsilon},
\label{eq:prog_norm}
\end{equation}
where $\epsilon$ is a small stability constant.
Under the factorization in Eq.~\eqref{eq:utility_factorization}, $s_i^{(\mathrm{prog})}$ estimates the relative reward trend component: a task receives a larger trend adjustment when its reward improves faster than rewards on the other tasks in the current training phase.

\subsubsection{Progress-Modulated Utility}
\label{sec:fusion}

To combine the two signals, \pac takes advantage-derived learnability as the base utility and rescales it by normalized progress.
Setting $\LearnFactor{i}=s_i^{(\mathrm{adv})}$ and $\ConvertFactor{i}=1+s_i^{(\mathrm{prog})}$ gives
\begin{equation}
u_i^{(t)} = \bigl( 1 + s_i^{(\mathrm{prog})} \bigr) \cdot s_i^{(\mathrm{adv})}.
\label{eq:multiplicative_fusion}
\end{equation}
Here, $s_i^{(\mathrm{adv})}$ estimates the magnitude of the task-induced GRPO update, while $1+s_i^{(\mathrm{prog})}$ modulates this magnitude using recent reward gains.
The unit offset makes the modifier neutral when the normalized progress estimate is near zero, preserving informative advantage signals.

\subsection{Bayesian Curriculum Controller}
\label{sec:thompson}

The fused utility $u_i^{(t)}$ in Eq.~\eqref{eq:multiplicative_fusion} is noisy and is observed with unequal reliability across arms, since different arms receive different numbers of samples in each batch.
\pac therefore maintains a Gaussian belief over the latent utility $v_i$ of each task arm:
\begin{equation}
v_i \sim \mathcal{N}(\mu_i, \sigma_i^2),
\label{eq:gaussian_posterior}
\end{equation}
where the posterior mean $\mu_i$ summarizes the current utility estimate and the posterior variance $\sigma_i^2$ quantifies the remaining uncertainty that drives exploration in Thompson Sampling.
Let $c_i^{(t)}$ be the number of training samples drawn from task $i$ at step $t$, $\bar{c}=B/N$ the count under uniform allocation, and $\rho_i^{(t)}=c_i^{(t)}/\bar{c}$ the relative amount of evidence for task $i$ in that batch.
We treat $\rho_i^{(t)}$ as the observation precision of $u_i^{(t)}$, so that utility estimates supported by more samples are weighted more heavily, while sparsely sampled arms retain higher posterior uncertainty.
Under this Gaussian--Gaussian observation model, the conjugate update is a precision-weighted average:
\begin{equation}
\begin{aligned}
\eta_i
&=
\frac{1}{\sigma_i^{2}}
+
\rho_i^{(t)}, \\
\mu_i^{+}
&=
\frac{
\frac{\mu_i}{\sigma_i^{2}}
+
\rho_i^{(t)} u_i^{(t)}
}{
\eta_i
}, \\
(\sigma_i^{2})^{+}
&=
\eta_i^{-1}.
\end{aligned}
\label{eq:posterior_update}
\end{equation}
The update therefore gives more influence to observations supported by more samples, while preserving higher posterior uncertainty for arms that have been sampled less often.
Task utilities are also non-stationary: an arm that is currently informative may saturate after a few updates, and a previously stalled arm may become learnable as the policy improves.
To prevent the posterior from collapsing onto stale evidence, we apply non-stationary variance inflation after each update:
\begin{equation}
\mu_i \leftarrow \mu_i^{+}, \qquad
\sigma_i^2 \leftarrow (\sigma_i^{2})^{+} + \delta^2,
\label{eq:variance_inflation}
\end{equation}
where the inflation variance $\delta^2$ keeps the posterior variance bounded away from zero, which prevents any single arm from being overprioritized early in training and preserves responsiveness to later shifts in arm utility.

With the posterior in place, the controller builds the next batch of size $B$ by sequential Thompson Sampling.
For each training sample in the batch, the controller draws one utility value $\tilde{v}_j\sim\mathcal{N}(\mu_j,\sigma_j^2)$ for every arm $j$, selects $i=\arg\max_j \tilde{v}_j$, and samples one prompt from $\mathcal{D}_i$; repeating this draw $B$ times produces the full batch.
This per-sample draw lets a single batch mix prompts from multiple arms in proportion to their posteriors, instead of locking the batch onto the arm with the largest current estimate.
This design exploits arms with high posterior mean while keeping exploration proportional to posterior variance; the variance inflation in Eq.~\eqref{eq:variance_inflation} helps preserve opportunities to revisit arms whose utility shifts later in training.

\subsection{Training Procedure}
\label{sec:training_procedure}

Algorithm~\ref{alg:pac} summarizes the training loop, which proceeds in two stages.
At the start of training, reward and advantage statistics for each arm are based on only a few rollouts, so adapting the sampling distribution immediately would amplify the noise in those early estimates.
\pac therefore opens with a cold-start stage of $T_{\mathrm{cold}}$ steps during which batches are drawn uniformly over arms.
The controller still computes the fused utility $u_i^{(t)}$ and updates the Gaussian posterior throughout this stage, so every arm accumulates comparable initial evidence before Thompson Sampling becomes active.

After the cold-start stage, each training step alternates between a policy update and a controller update.
The controller first builds a batch of size $B$ by sequential Thompson Sampling (Section~\ref{sec:thompson}), and the policy then generates GRPO rollouts on the selected prompts and receives verifiable rewards.
The resulting reward and advantage statistics are appended to the history window, after which \pac recomputes the fused utility via Eq.~\eqref{eq:multiplicative_fusion}, applies the precision-weighted posterior update in Eq.~\eqref{eq:posterior_update}, and applies the variance inflation in Eq.~\eqref{eq:variance_inflation} before the next step.

\begin{algorithm}[t]
\caption{\textbf{\pac: Progress-Augmented Advantage Curriculum}}
\label{alg:pac}
\footnotesize
\begin{algorithmic}[1]
\Require Training set with $N$ tasks $\mathcal{D}=\{\mathcal{D}_1,\dots,\mathcal{D}_N\}$; LLM policy $\pi_\theta$ with parameters $\theta$; Batch size $B$; Total training steps $T$; Cold-start length $T_{\mathrm{cold}}$
\State Initialize $(\mu_i,\sigma_i^2)\gets(0,1) \quad \forall i\in\{1,2,\ldots,N\}$
\For{$t\gets 0$ \textbf{to} $T-1$}
\State $\mathcal{B}_t\gets \emptyset$
\While{$|\mathcal{B}_t|<B$}
\If{$t < T_{\mathrm{cold}}$}
\State Select task $i$ uniformly from $\{1,\dots,N\}$
\Else
\State Draw $\tilde{v}_j\sim\mathcal{N}(\mu_j,\sigma_j^2) \quad \forall j\in\{1,\ldots,N\}$
\State Select task $i\gets\arg\max_j\tilde{v}_j$
\EndIf
\State Sample prompt $x$ uniformly from $\mathcal{D}_i$
\State $\mathcal{B}_t\gets \mathcal{B}_t\cup\{x\}$
\EndWhile
\State Run $\pi_\theta$ on each $x\in \mathcal{B}_t$ to generate rollouts $\mathcal{T}$ and compute rewards $\mathbf{r}$
\State Estimate advantages $\hat{A}$ and update $\pi_\theta$ with GRPO
\ForAll{task $i\in \{1,2,\ldots,N\}$}
\State Compute fused utility $u_i^{(t)}$ using Eq.~\eqref{eq:multiplicative_fusion}
\State Update $(\mu_i,\sigma_i^2)$ with $u_i^{(t)}$ using Eq.~\eqref{eq:posterior_update} and Eq.~\eqref{eq:variance_inflation}
\EndFor
\EndFor
\State \Return Fine-tuned LLM $\pi_{\theta}$
\end{algorithmic}
\end{algorithm}

Across the two stages, \pac progressively reallocates rollout budget toward tasks that show both an advantage-derived learning signal and measurable reward improvement.
Thompson Sampling trades off exploitation of high-utility tasks against exploration under posterior uncertainty, while the variance inflation in Eq.~\eqref{eq:variance_inflation} is intended to maintain responsiveness as task utilities evolve during training.

\section{Experiments}
\label{sec:experiments}

\begin{figure*}[!t]
\centering
\begin{subfigure}[t]{0.49\textwidth}
    \centering
    \includegraphics[width=\linewidth]{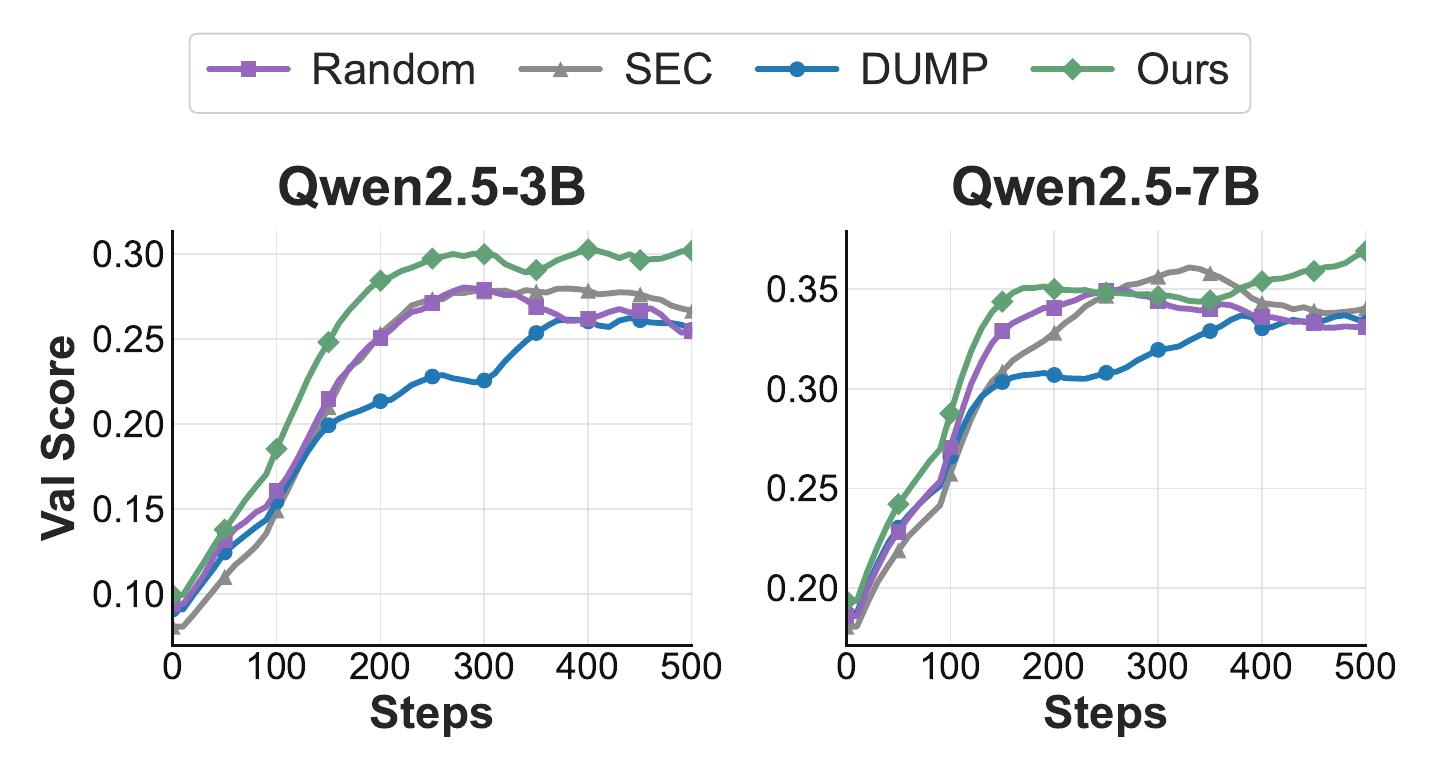}
    \caption{\textbf{Multi-level reasoning setting.}}
    \label{fig:main_results_multi_difficulty}
\end{subfigure}
\hfill
\begin{subfigure}[t]{0.49\textwidth}
    \centering
    \includegraphics[width=\linewidth]{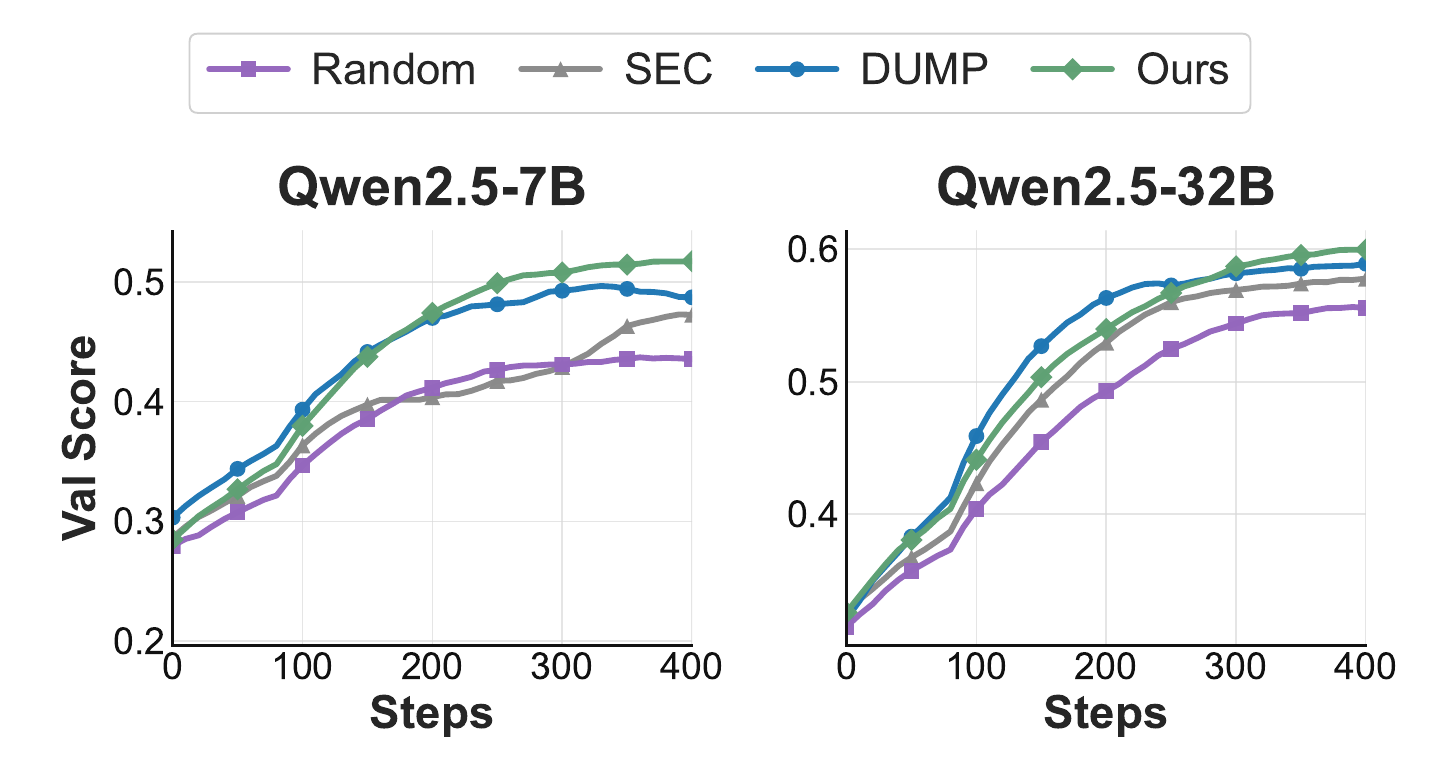}
    \caption{\textbf{Multi-domain reasoning setting.}}
    \label{fig:main_results_multi_domain}
\end{subfigure}
\caption{\textbf{Validation score over training progress for the two multi-task RL settings.} Figure~(a) shows the multi-level reasoning setting with Qwen2.5-3B and Qwen2.5-7B, evaluated on the extremely hard validation splits of Countdown, Zebra, and ARC. Figure~(b) shows the multi-domain reasoning setting with Qwen2.5-7B and Qwen2.5-32B, evaluated on MATH500, AMC22--23, AIME24, BigCodeBench, and the held-out 6-character K\&K split. In both figures, "Val Score" denotes the mean of the task-level validation scores.}
\label{fig:main_results}
\end{figure*}

\subsection{Experimental Setup}
\label{sec:experimental_setup}

\paragraph{Datasets and Evaluation.}
We evaluate \pac in two multi-task RL settings: a multi-level reasoning setting based on \citet{chen2025selfevolving} and a multi-domain mixture spanning mathematics, code generation, and symbolic logic puzzles.

\noindent\textit{Multi-level reasoning setting.}
Following the multi-level task setup of SEC \citep{chen2025selfevolving}, this setting includes Countdown, Zebra, and ARC (Abstraction and Reasoning Corpus) \citep{chollet2019measure}.
Countdown requires the model to combine a given set of integers and arithmetic operators to reach a target number; difficulty is controlled by the number of input integers.
Zebra is a constraint-satisfaction logic puzzle in which entities must be assigned properties from textual clues; difficulty increases with the number of entities and properties.
ARC provides input-output string transformation examples and asks the model to infer the underlying rule for an unseen case; input-string length controls difficulty.
For each task, we use the training portions of the simple, medium, and hard splits for RL post-training.
We evaluate on disjoint held-out validation portions at all four difficulty levels; the main results use the extremely hard level to measure transfer to harder problems, while Appendix~\ref{app:levelwise_validation} reports the complete per-difficulty breakdown.
We run this setting with Qwen2.5-3B and Qwen2.5-7B \citep{qwen2025qwen25}.

\noindent\textit{Multi-domain reasoning setting.}
For the math domain, we train on DAPO-Math-17k, a mathematical reasoning corpus designed for rule-based answer verification in RL training \citep{yu2025dapo}.
For the code domain, we train on Code-R1-12k, where each example provides a programming prompt with unit-test-based ground truth for automatic validation; following \textsc{Omni-Thinker} \citep{li2025omnithinker}, we filter out prompts longer than 1,024 tokens.
For the logic domain, we use the 3-, 4-, and 5-character splits of the K\&K puzzle dataset, where each character is either a knight who always tells the truth or a knave who always lies, and the goal is to infer the identity of each character \citep{wang2025dump}.
For validation, math is evaluated on MATH500 \citep{hendrycks2021math,lightman2023lets}, AMC22--23, and AIME24; code is evaluated by pass@1 on BigCodeBench \citep{zhuo2025bigcodebench}; and K\&K is evaluated on the held-out 6-character split.
Because this setting requires stronger base capabilities across mathematics, code generation, and symbolic logic, we use larger backbones: Qwen2.5-7B and Qwen2.5-32B.
Representative prompt examples for all datasets used in the two settings are provided in Appendix~\ref{app:prompt_examples}.

\paragraph{Baselines.}
We compare against one single-task reference and three curriculum baselines.
\textbf{ST} (Single Task) trains each task or domain separately and evaluates only on its matching validation set.
\textbf{Random} samples training examples uniformly from the available tasks or task splits.
\textbf{SEC} implements the Self-Evolving Curriculum of \citet{chen2025selfevolving}, which formulates curriculum selection as a non-stationary multi-armed bandit over task categories, uses average absolute advantage as the arm reward, and selects categories with Boltzmann Sampling.
\textbf{DUMP} follows the same advantage-driven principle as SEC but performs distribution-level scheduling with UCB-based sampling \citep{wang2025dump}.
Additional implementation details are provided in Appendix~\ref{app:implementation_details}.

\subsection{Results and Discussion}
\label{sec:results_discussion}

\begin{table*}[!t]
\centering
\small
\setlength{\tabcolsep}{4pt}
\renewcommand{\arraystretch}{1.03}
\begin{tabularx}{\linewidth}{@{}l*{4}{>{\centering\arraybackslash}X}@{}}
\toprule
\textbf{Method} & \textbf{Countdown} & \textbf{Zebra} & \textbf{ARC} & \textbf{Avg.} \\
\midrule
\multicolumn{5}{@{}l}{\textbf{Qwen2.5-3B}} \\
Random & $0.285$\stddev{0.021} & $0.251$\stddev{0.016} & $0.229$\stddev{0.013} & $0.255$\stddev{0.010} \\
SEC & $0.271$\stddev{0.019} & $\underline{0.288}$\stddev{0.015} & $\underline{0.240}$\stddev{0.012} & $\underline{0.267}$\stddev{0.009} \\
DUMP & $\underline{0.301}$\stddev{0.017} & $0.263$\stddev{0.014} & $0.202$\stddev{0.011} & $0.255$\stddev{0.009} \\
\rowcolor{ourstableblue}\textbf{Ours} & $\mathbf{0.332}$\stddev{0.022} & $\mathbf{0.303}$\stddev{0.019} & $\mathbf{0.271}$\stddev{0.015} & $\mathbf{0.302}$\stddev{0.011} \\
\stcell{ST} & \stcell{$0.441$\stddev{0.024}} & \stcell{$0.328$\stddev{0.017}} & \stcell{$0.306$\stddev{0.014}} & \stcell{$0.358$\stddev{0.011}} \\
\midrule
\multicolumn{5}{@{}l}{\textbf{Qwen2.5-7B}} \\
Random & $0.393$\stddev{0.023} & $0.294$\stddev{0.017} & $0.306$\stddev{0.016} & $0.331$\stddev{0.011} \\
SEC & $\underline{0.404}$\stddev{0.022} & $\mathbf{0.304}$\stddev{0.015} & $0.313$\stddev{0.014} & $\underline{0.340}$\stddev{0.010} \\
DUMP & $0.375$\stddev{0.021} & $0.294$\stddev{0.018} & $\underline{0.330}$\stddev{0.015} & $0.333$\stddev{0.010} \\
\rowcolor{ourstableblue}\textbf{Ours} & $\mathbf{0.439}$\stddev{0.026} & $\underline{0.296}$\stddev{0.013} & $\mathbf{0.373}$\stddev{0.019} & $\mathbf{0.369}$\stddev{0.012} \\
\stcell{ST} & \stcell{$0.487$\stddev{0.025}} & \stcell{$0.320$\stddev{0.016}} & \stcell{$0.417$\stddev{0.018}} & \stcell{$0.408$\stddev{0.012}} \\
\bottomrule
\end{tabularx}
\caption{\textbf{Final validation performance on the multi-level reasoning setting after 500 training steps.} All results are reported as mean $\pm$ standard deviation over three seeds.}
\label{tab:multi_difficulty_validation}
\end{table*}

\begin{table*}[!t]
\centering
\small
\setlength{\tabcolsep}{2pt}
\renewcommand{\arraystretch}{1.03}
\begin{tabularx}{\linewidth}{@{}l*{6}{>{\centering\arraybackslash}X}@{}}
\toprule
\textbf{Method} & \multicolumn{3}{c}{\textbf{Math}} & \textbf{Code} & \textbf{Logic} & \textbf{Avg.} \\
\cmidrule(lr){2-4}\cmidrule(lr){5-5}\cmidrule(lr){6-6}
& \textbf{MATH500} & \textbf{AMC22--23} & \textbf{AIME24} & \textbf{BigCodeBench} & \textbf{K\&K} & \\
\midrule
\multicolumn{7}{@{}l}{\textbf{Qwen2.5-7B}} \\
Random & $0.727$\stddev{0.020} & $\mathbf{0.489}$\stddev{0.038} & $0.133$\stddev{0.033} & $\underline{0.217}$\stddev{0.013} & $0.640$\stddev{0.016} & $0.435$\stddev{0.014} \\
SEC & $0.535$\stddev{0.021} & $0.387$\stddev{0.036} & $\underline{0.135}$\stddev{0.035} & $0.214$\stddev{0.012} & $0.851$\stddev{0.012} & $0.472$\stddev{0.013} \\
DUMP & $\mathbf{0.745}$\stddev{0.018} & $0.288$\stddev{0.034} & $0.108$\stddev{0.031} & $0.208$\stddev{0.014} & $\underline{0.873}$\stddev{0.015} & $\underline{0.487}$\stddev{0.015} \\
\rowcolor{ourstableblue}\textbf{Ours} & $\underline{0.734}$\stddev{0.022} & $\underline{0.442}$\stddev{0.043} & $\mathbf{0.163}$\stddev{0.039} & $\mathbf{0.218}$\stddev{0.011} & $\mathbf{0.888}$\stddev{0.011} & $\mathbf{0.517}$\stddev{0.013} \\
\stcell{ST} & \stcell{$0.758$\stddev{0.017}} & \stcell{$0.518$\stddev{0.041}} & \stcell{$0.267$\stddev{0.036}} & \stcell{$0.250$\stddev{0.014}} & \stcell{$0.931$\stddev{0.010}} & \stcell{$0.565$\stddev{0.012}} \\
\midrule
\multicolumn{7}{@{}l}{\textbf{Qwen2.5-32B}} \\
Random & $0.648$\stddev{0.018} & $0.598$\stddev{0.040} & $0.246$\stddev{0.034} & $0.247$\stddev{0.013} & $0.923$\stddev{0.011} & $0.556$\stddev{0.013} \\
SEC & $0.820$\stddev{0.016} & $\underline{0.605}$\stddev{0.038} & $0.211$\stddev{0.031} & $0.236$\stddev{0.011} & $0.952$\stddev{0.009} & $0.578$\stddev{0.012} \\
DUMP & $\mathbf{0.833}$\stddev{0.015} & $0.598$\stddev{0.037} & $\underline{0.247}$\stddev{0.032} & $\underline{0.250}$\stddev{0.015} & $\underline{0.957}$\stddev{0.008} & $\underline{0.589}$\stddev{0.014} \\
\rowcolor{ourstableblue}\textbf{Ours} & $\underline{0.822}$\stddev{0.019} & $\mathbf{0.625}$\stddev{0.044} & $\mathbf{0.259}$\stddev{0.028} & $\mathbf{0.253}$\stddev{0.010} & $\mathbf{0.977}$\stddev{0.012} & $\mathbf{0.600}$\stddev{0.011} \\
\stcell{ST} & \stcell{$0.840$\stddev{0.014}} & \stcell{$0.711$\stddev{0.042}} & \stcell{$0.400$\stddev{0.035}} & \stcell{$0.280$\stddev{0.012}} & \stcell{$0.994$\stddev{0.007}} & \stcell{$0.641$\stddev{0.011}} \\
\bottomrule
\end{tabularx}
\caption{\textbf{Final validation performance on the multi-domain reasoning setting after 400 training steps.} All results are reported as mean $\pm$ standard deviation over three seeds. ``Avg.'' first averages the three math benchmarks and then averages Math, Code, and Logic.}
\label{tab:multi_domain_validation}
\end{table*}

Figure~\ref{fig:main_results} reports the validation score over training progress, and Tables~\ref{tab:multi_difficulty_validation}--\ref{tab:multi_domain_validation} report final validation scores.
All numbers in tables and all curves in figures are averaged over three runs with different random seeds.
ST is a single-task reference and is excluded from highlighting.

\textbf{\pac improves both sample efficiency and final performance.}
In Figure~\ref{fig:main_results}, \pac reaches the final average scores of the strongest curriculum baselines with fewer rollout steps.
In the multi-level setting, \pac matches the final average of SEC after about 170 steps with Qwen2.5-3B and 143 steps with Qwen2.5-7B, whereas SEC reaches its own final average at about 225 and 227 steps, corresponding to sample-efficiency gains of roughly $1.3\times$ and $1.6\times$.
In the multi-domain setting, \pac reaches the final average of DUMP earlier for both Qwen2.5-7B and Qwen2.5-32B, corresponding to approximately $1.2\times$ and $1.3\times$ sample-efficiency gains, respectively, while maintaining higher final averages.
This behavior is consistent with the utility design: advantage magnitude identifies tasks that can still provide informative updates, while reward-derived progress discounts updates that no longer lead to reward gains.

\textbf{Multi-level results show more balanced transfer to harder splits.}
Table~\ref{tab:multi_difficulty_validation} shows that \pac achieves the best average score for both model sizes, with improvements over SEC of 13.1\% on Qwen2.5-3B and 8.5\% on Qwen2.5-7B.
These gains indicate that \pac improves cross-difficulty transfer rather than only optimizing one validation split.
SEC and DUMP remain competitive on some individual splits, suggesting that advantage-based curricula can identify locally useful update signals.
However, their strengths are less consistent across model sizes and task types: high-advantage splits may continue to receive budget after reward-derived progress slows, whereas \pac shifts budget toward splits with active gains.

\textbf{Multi-domain results show stronger cross-domain balance.}
Table~\ref{tab:multi_domain_validation} shows that \pac achieves the best curriculum average for both Qwen2.5-7B and Qwen2.5-32B, with improvements over DUMP of 6.2\% and 1.9\%.
DUMP and Random remain strong on selected math benchmarks, but these localized gains do not translate into the best cross-domain average.
By contrast, \pac maintains stronger average performance while improving on harder math benchmarks as well as on code and logic benchmarks, suggesting that the controller avoids overcommitting rollout budget to a single domain.
Appendix~\ref{app:curriculum_dynamics} further analyzes the training-time curriculum dynamics in the multi-domain setting and shows how \pac reallocates rollout budget as reward-derived progress changes.
Overall, these results suggest that effective curriculum allocation should jointly track where advantages remain large and where recent updates continue to yield measurable reward gains.

\subsection{Ablation Study}
\label{sec:ablation_study}

\begin{figure}[t]
\centering
\includegraphics[width=0.69\linewidth]{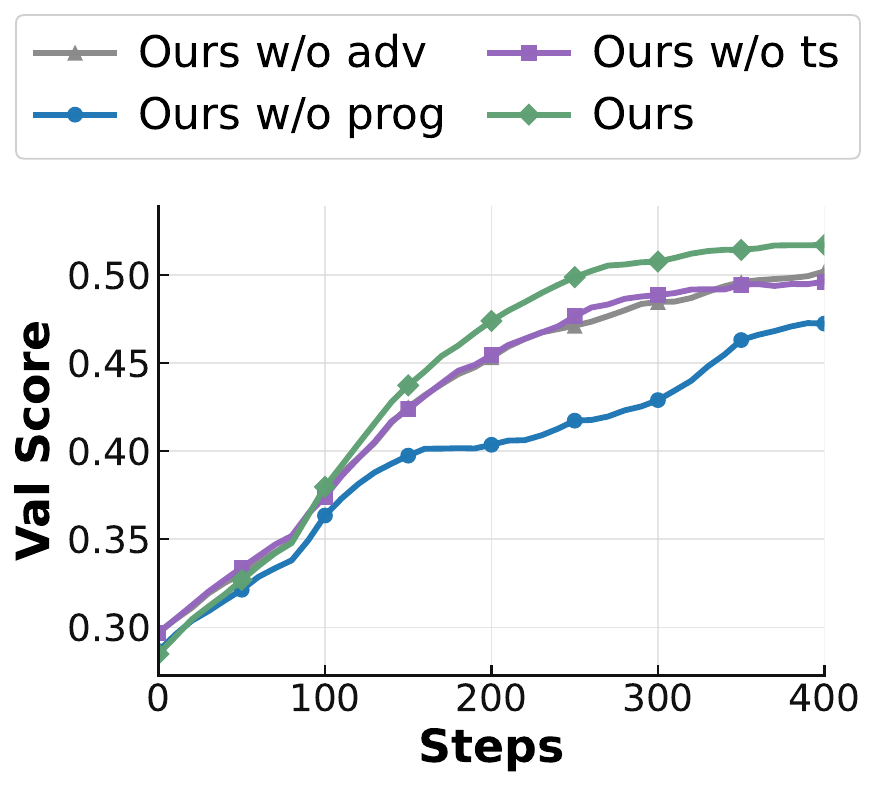}
\caption{\textbf{Ablation of \pac on the multi-domain reasoning setting with Qwen2.5-7B.} "Val Score" denotes the mean of the task-level validation scores.}
\label{fig:ablation}
\end{figure}

We ablate \pac's three design choices on the multi-domain reasoning setting with Qwen2.5-7B: \textbf{Ours w/o adv} sets $u_i^{(t)}=s_i^{(\mathrm{prog})}$, \textbf{Ours w/o prog} sets $u_i^{(t)}=s_i^{(\mathrm{adv})}$, and \textbf{Ours w/o ts} keeps the fused utility but replaces the Bayesian controller with Boltzmann Sampling over $u_i^{(t)}$.
Appendix~\ref{app:ablation_details} formalizes each variant and reports the per-task breakdown.
History-window sensitivity is reported in Appendix~\ref{app:history_window_sensitivity}.

All three components contribute, with reward-derived progress the dominant factor: \pac improves over Ours w/o prog by 9.5\% on the multi-domain average.
This gap arises because the controller collapses to advantage-only allocation and keeps investing in arms whose advantages remain large after their rewards have saturated, the K\&K Logic over-allocation visualized in Appendix~\ref{app:curriculum_dynamics}.
The improvement over Ours w/o adv is 3.0\%: a short-window reward slope alone is unreliable on slow-progress benchmarks such as AIME24 and BigCodeBench, where the learnability term separates improvement from noisy slopes.
The improvement over Ours w/o ts is 4.2\%: the fused utility is preserved but posterior uncertainty is discarded, so once a few arms develop higher utility estimates the allocation concentrates on them quickly.

\section{Conclusion}

In this work, we study online curriculum learning for multi-task RL post-training of LLMs, where rollout budgets must be allocated across heterogeneous tasks whose utility shifts during training.
Advantage-only curricula capture the induced GRPO update magnitude but ignore whether it translates into reward gains.
We therefore propose \pac, a Progress-Augmented Advantage Curriculum that estimates task utility by combining advantage-derived learnability with reward-derived progress, and uses Bayesian Thompson Sampling to adapt task allocation under uncertainty.
Across multi-level and multi-domain reasoning settings, \pac reaches baseline-level validation averages with fewer rollout steps than random and advantage-based baselines.
\pac improves final performance, achieving higher validation averages across model sizes and settings.
These findings suggest that adaptive RL post-training curricula benefit from tracking both update potential and reward conversion.

\section*{Limitations}

\paragraph{Reward signal assumptions.}
\pac is validated under RL post-training with outcome-level verifiable rewards, including rule-based answer checkers, symbolic verifiers, and unit-test execution.
This setting matches many reasoning-oriented GRPO and RLVR pipelines and provides relatively dense reward observations from which window-level mean rewards can be reliably computed.
Under sparser, noisier, or longer-horizon reward signals, such as those from learned reward models or process-level rewards, the linear trend estimator in Eq.~\eqref{eq:reward_trend} may need to be replaced by a more robust progress estimator that is less sensitive to reward noise and non-monotonic learning dynamics.

\paragraph{Task-level granularity.}
\pac maintains a Gaussian posterior per task arm, so the curriculum granularity is tied to the available task split structure, such as domain labels or difficulty buckets.
When a single arm contains substantial internal diversity, for example, a math arm spanning a wide range of prompt difficulties, task-level allocation cannot exploit the variation within that arm.
Extending \pac from task-level to prompt-level or instance-level sampling remains open and would require posterior structures that scale to arm counts far larger than the three and nine arms studied here.

\paragraph{Empirical scope.}
Our experiments evaluate \pac with backbones up to 32B parameters on reasoning, code, and logic mixtures with at most nine task arms.
We did not directly study \pac on other capability families such as instruction following, dialogue, multilingual, or safety, on substantially larger arm sets, or under GRPO variants with alternative advantage normalization methods.
We expect the multiplicative fusion in Eq.~\eqref{eq:multiplicative_fusion} and the Thompson Sampling controller in Section~\ref{sec:thompson} to extend to these settings, but verifying this empirically remains future work.

\bibliography{paper}
\clearpage
\appendix
\numberwithin{equation}{section}
\renewcommand{\theequation}{\Alph{section}.\arabic{equation}}
\renewcommand{\theHequation}{\Alph{section}.\arabic{equation}}

\onecolumn
\section{Advantage-Derived Learnability Derivation}
\label{app:advantage_learnability}

\subsection{Statement and Assumptions}
\begin{theorem}[Absolute advantage as a learnability proxy]
Fix the current policy $\pi_{\theta}$ and a task dataset $\mathcal{D}_i$.
For prompts $x\sim\mathcal{D}_i$ and rollouts $\{o_j\}_{j=1}^{G}\sim\pi_{\theta}(\cdot\mid x)$, define
\[
L_i(\theta)=\mathbb{E}_{x\sim\mathcal{D}_i,\, o_{1:G}\sim\pi_{\theta}(\cdot\mid x)}\left[\frac{1}{G}\sum_{j=1}^{G}|\hat{A}_j|\right].
\]
When the score-gradient norms $\|\nabla_{\theta}\log\pi_{\theta}(o_j\mid x)\|$ are bounded and vary smoothly across tasks, $L_i(\theta)$ tracks the scale of the GRPO update induced by task $i$ up to slowly varying factors.
Thus the mean absolute advantage provides an observable proxy for the learnability factor $\LearnFactor{i}$ used in Section~\ref{sec:absolute_advantage}.
\end{theorem}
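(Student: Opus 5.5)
The statement is informal, so the plan is to make it precise as a two-sided bound. The update norm for task $i$ should be sandwiched between constant multiples of $L_i(\theta)$, with constants depending only on score-gradient norms.

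Write the per-prompt GRPO update from Eq.~\eqref{eq:prompt_update} as
\[
g_x=\mathbb{E}\Big[\tfrac{1}{G}\textstyle\sum_j \hat{A}_j\,\nabla_\theta\log\pi_\theta(o_j\mid x)\Big].
\]
Set $g_i=\mathbb{E}_{x\sim\mathcal{D}_i}[g_x]$. I will use the realized per-batch update, i.e.\ the quantity inside the expectation, as the target "update scale". I will also define $\Delta\theta_i$ as $\eta$ times its magnitude, following Eq.~\eqref{eq:utility_factorization}.

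\textbf{Upper bound.} Assume $\|\nabla_\theta\log\pi_\theta(o\mid x)\|\le M_i$ for responses generated on prompts of task $i$. The triangle inequality gives
\[
\Big\|\tfrac1G\textstyle\sum_j\hat{A}_j\nabla\log\pi_\theta(o_j\mid x)\Big\|\le M_i\,\tfrac1G\textstyle\sum_j|\hat{A}_j|.
\]
Taking expectations and using Jensen ($\|\mathbb{E}\cdot\|\le\mathbb{E}\|\cdot\|$) yields $\mathbb{E}\|\Delta\theta_i\|\le \eta M_i L_i(\theta)$.

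\textbf{Lower bound.} This is the step I expect to be the main obstacle, since cancellation among the $\hat{A}_j\nabla\log\pi_\theta(o_j\mid x)$ terms can make the update small even when $|\hat{A}_j|$ is large. Note that $\sum_j\hat{A}_j=0$, so one can subtract any common vector $\bar{s}$ from the scores. I would introduce a non-degeneracy assumption. The centered scores $\nabla\log\pi_\theta(o_j\mid x)-\bar{s}$ should have norm at least $m_i>0$. They should also be positively aligned with the sign of $\hat{A}_j$ in an aggregate sense: there is a unit direction $e$ with $\langle\nabla\log\pi_\theta(o_j\mid x),e\rangle\,\mathrm{sign}(\hat{A}_j)\ge\kappa_i m_i$. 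This is natural for binary verifiable rewards, where correct and incorrect responses differ systematically. Projecting onto $e$ then gives the per-batch bound
\[
\|\cdot\|\ge\kappa_i m_i\tfrac1G\textstyle\sum_j|\hat{A}_j|,
\]
and hence $\mathbb{E}\|\Delta\theta_i\|\ge\eta\kappa_i m_i L_i(\theta)$.

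For binary rewards I would also note that $\tfrac1G\sum_j|\hat{A}_j|=2\sqrt{p(1-p)}$ up to $\epsilon$, where $p$ is the group success rate. This makes the dependence on reward separation explicit.

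\textbf{Conclusion.} Combining the two bounds gives $c_i L_i\le\mathbb{E}\|\Delta\theta_i\|/\eta\le C_i L_i$. "Varying smoothly across tasks" means the ratio $C_i/c_i$ and the values $M_i$ are comparable across $i$. Therefore differences in $L_i$ explain differences in $\LearnFactor{i}$ up to these slowly varying factors. The windowed estimate $s_i^{(\mathrm{adv})}$ in Eq.~\eqref{eq:adv_learnability} is an empirical average of $L_i$ along nearby iterates, so it inherits the same proxy property, provided $\theta$ changes little over the window.
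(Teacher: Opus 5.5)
Your upper bound is the rigorous content of the paper's proof. The paper bounds $\|g_i(\theta)\|$ by $\mathbb{E}[\frac1G\sum_j|\hat A_j|\,\|\nabla_\theta\log\pi_\theta(o_j\mid x)\|]$ using the same triangle-inequality step plus Jensen. It then asserts that, because score-gradient norms vary slowly across tasks, $L_i(\theta)$ is the dominant task-dependent factor.

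You depart from the paper in two ways.

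\emph{Different target.} You bound $\mathbb{E}\|\Delta\theta_i\|$ for a single prompt group, not the norm $\|g_i\|$ of the averaged gradient. This is exactly how $\LearnFactor{i}$ is defined in Eq.~\eqref{eq:utility_factorization}, and your bound implies the paper's via $\|g_i\|\le\mathbb{E}\|\cdot\|$.

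\emph{Lower bound.} You also add a lower bound, which is what ``tracks'' actually requires. A one-sided bound cannot exclude cancellation. With $G=2$ and one correct response, the mean absolute advantage is of order one. Yet the update $\tfrac12\hat A_1\bigl(\nabla_\theta\log\pi_\theta(o_1\mid x)-\nabla_\theta\log\pi_\theta(o_2\mid x)\bigr)$ is nearly zero when the two scores nearly coincide. So the paper's conclusion is heuristic, while yours is a genuine sandwich $c_iL_i\le\mathbb{E}\|\Delta\theta_i\|/\eta\le C_iL_i$.

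The price is that your alignment condition is an extra no-cancellation hypothesis. The theorem's ``bounded and smoothly varying score norms'' does not supply it; the paper avoids needing it by staying one-sided.

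Your choice of target is also what makes the lower bound cheap. A group-dependent direction $e$ suffices for $\mathbb{E}\|\Delta\theta_i\|$. Lower-bounding $\|g_i\|$ would instead need one direction that works for all prompts and rollout groups of the task.

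A few small repairs:
\begin{itemize}
\item Impose the alignment inequality only for $j$ with $\hat A_j\neq0$; otherwise it reads $0\ge\kappa_i m_i$.
\item Either state the alignment for centered scores, which is where $\sum_j\hat A_j=0$ helps, or drop the centering remark. The separate norm bound $m_i$ is never used on its own; only the product $\kappa_i m_i$ enters.
\item The identity $\frac1G\sum_j|\hat A_j|=2\sqrt{p(1-p)}$ holds with the population standard deviation. With the sample standard deviation it picks up a factor $\sqrt{(G-1)/G}$.
\end{itemize}
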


\subsection{Proof and Online Estimator}
\begin{proof}
For a prompt $x$ and its group of sampled responses $\{o_j\}_{j=1}^{G}$, the group-normalized advantage is
\[
\hat{A}_j=\frac{r_j-\operatorname{mean}(\{r_1,\ldots,r_G\})}{\operatorname{std}(\{r_1,\ldots,r_G\})+\epsilon},
\]
matching the definition in Section~\ref{sec:problem_formulation}.
The policy gradient under common policy-gradient methods (e.g., PPO or GRPO) can be written as
\begin{equation}
g_x(\theta)=\mathbb{E}_{o_{1:G}\sim \pi_{\theta}(\cdot \mid x)}\left[\frac{1}{G}\sum_{j=1}^{G}\hat{A}_j\,\nabla_{\theta}\log \pi_{\theta}(o_j \mid x)\right].
\label{eq:pg_objective_adv}
\end{equation}
Aggregating over prompts from task $i$ gives the task-conditioned gradient $g_i(\theta)=\mathbb{E}_{x\sim\mathcal{D}_i}[g_x(\theta)]$.
Its scale is controlled by the absolute advantage and the score-gradient norm:
\begin{equation}
\left\|g_i(\theta)\right\|_2
\;\lesssim\;
\mathbb{E}_{x\sim\mathcal{D}_i,\, o_{1:G}\sim\pi_{\theta}(\cdot\mid x)}
\!\left[
\frac{1}{G}\sum_{j=1}^{G}
\left|\hat{A}_j\right|\,
\left\|\nabla_{\theta}\log\pi_{\theta}(o_j\mid x)\right\|_2
\right].
\label{eq:abs_adv_grad_mag}
\end{equation}
The right-hand side separates the task-conditioned update scale into two factors: the absolute advantage $|\hat{A}_j|$ and the score-gradient norm $\|\nabla_{\theta}\log\pi_{\theta}(o_j\mid x)\|_2$.
Under the smoothness assumption in the theorem statement, the score-gradient norm changes more slowly across tasks than the reward separation captured by the normalized advantages.
The main task-dependent term controlling the GRPO update scale is therefore the mean absolute advantage $L_i(\theta)$.
Over the history window, this quantity is estimated by
\[
\begin{aligned}
a_i^{(t')}
&=
\frac{1}{|\mathcal{X}_i^{(t')}|}
\sum_{x\in\mathcal{X}_i^{(t')}}
\frac{1}{G}\sum_{j=1}^{G}|\hat{A}_{t',x,j}|, \\
s_i^{(\mathrm{adv})}
&=
\frac{1}{W}
\sum_{t'=t-W}^{t-1}a_i^{(t')} .
\end{aligned}
\]
which is the advantage-derived learnability score in Eq.~\eqref{eq:adv_learnability}.
Thus $s_i^{(\mathrm{adv})}$ estimates $\LearnFactor{i}$: the amount of advantage-driven update signal that task $i$ can still provide under the current policy.
\end{proof}

\clearpage

\twocolumn
\section{Implementation Details}
\label{app:implementation_details}

\subsection{Data, Arms, and Verifiers}
Table~\ref{tab:dataset_arm_reward_stats} summarizes the arm definitions, data sizes, validation splits, and automatic reward functions for the two experimental settings.

\begin{table*}[t]
\centering
\small
\setlength{\tabcolsep}{3.0pt}
\renewcommand{\arraystretch}{1.12}
\begin{tabularx}{\linewidth}{@{}>{\raggedright\arraybackslash}p{0.15\linewidth}>{\raggedright\arraybackslash}p{0.23\linewidth}>{\raggedright\arraybackslash}p{0.29\linewidth}>{\raggedright\arraybackslash}X@{}}
\toprule
\textbf{Setting} & \textbf{Training arms} & \textbf{Data size and validation} & \textbf{Reward / verifier} \\
\midrule
Multi-level &
9 arms from \{Countdown, Zebra, ARC\} $\times$ difficulty levels 1--3, defined by \texttt{extra\_info.type} and \texttt{extra\_info.}\allowbreak\texttt{difficulty}. &
90,000 training examples in total, with 10,000 examples per arm. Validation files contain 800 examples per task, with 200 examples for each difficulty level 1--4; reported out-of-distribution validation uses level 4. &
Countdown checks whether the boxed expression uses each number exactly once and reaches the target. Zebra and ARC compare the boxed answer against the rule-based target. \\
\midrule
Multi-domain &
3 arms from \{Math, Code, K\&K Logic\}, defined by \texttt{extra\_info.type}. K\&K 3-, 4-, and 5-character splits are grouped into the logic arm. &
26,657 training examples: 11,499 from DAPO-Math, 12,458 from Code-R1, and 2,700 from K\&K Logic. Validation uses 500 MATH500 examples, 83 AMC examples, 30 AIME examples, BigCodeBench for code, and 100 K\&K 6-character examples. &
Math uses the DeepScaleR-style symbolic and numeric answer verifier. Code uses unit-test execution for Code-R1 and BigCodeBench tasks. K\&K uses an answer verifier for knight/knave assignments. \\
\bottomrule
\end{tabularx}
\caption{\textbf{Dataset, arm, and reward statistics for the two experimental settings.}}
\label{tab:dataset_arm_reward_stats}
\end{table*}

\subsection{Training and Sampling}

\paragraph{Training configuration.}
All experiments are run on a single node with 8 NVIDIA GPUs.
At each training step, the curriculum sampler first constructs a batch with size $B=256$ in both the multi-level and multi-domain settings.
For each selected prompt, the current policy samples $G=8$ responses, which are scored by the corresponding verifiable reward function.
Group-normalized advantages are computed only within the $G$ responses generated for the same prompt, so reward scales are not normalized across different tasks or domains before the task-level statistics are accumulated.
The actor optimizer uses a PPO mini-batch size of 64.
Prompts are truncated to 2,048 tokens in both the multi-level and multi-domain settings; generated responses are truncated to 4,096 tokens in both settings.
Within each experimental setting, these GRPO hyperparameters are kept fixed across model sizes, curriculum baselines, and ablation variants.

\paragraph{Task arms and sampling.}
The curriculum controller operates over the training splits described in Section~\ref{sec:experimental_setup}; all multi-task methods receive the same arm set in a given experimental setting.
When an arm is selected, we draw one prompt uniformly from the corresponding training split.
For the multi-level setting, the arms follow the available training difficulty splits.
For the multi-domain setting, the arms follow the domain-specific training splits used to form the math, code, and logic mixture.
For all multi-task runs, every prompt is drawn through the corresponding curriculum sampler.
Table~\ref{tab:dataset_arm_reward_stats} summarizes the task arms, data sizes, and reward functions used in the two experimental settings.
In the multi-domain setting, the source corpora naturally provide different numbers of filtered, verifier-compatible examples.
We retain each filtered corpus as the prompt pool for its domain; per-step domain exposure is determined by the arm-level sampler, so corpus size affects only which prompts can be drawn after an arm is selected.

\subsection{Controller State and Utility Estimation}

\paragraph{\pac state updates.}
We initialize each arm in \pac with a Gaussian posterior mean $\mu_i=0$ and variance $\sigma_i^2=1$.
The cold-start length is fixed to $T_{\mathrm{cold}}=50$ steps in both the multi-level and multi-domain settings.
During this stage, batch construction is uniform, but the controller still records rewards and advantages and updates the posterior, giving each arm a comparable amount of initial evidence.
After the cold-start stage, batches are formed by the sequential Thompson Sampling procedure in Section~\ref{sec:thompson}: for each training sample, the controller draws one utility value from every arm posterior and assigns the sample to the arm with the largest draw.
For each arm, the resulting count $c_i^{(t)}$ determines the evidence weight $\rho_i^{(t)}=c_i^{(t)}/(B/N)$ in Eq.~\eqref{eq:posterior_update}.
After each posterior update, we apply the non-stationary variance inflation in Eq.~\eqref{eq:variance_inflation} to preserve exploration as task utilities change during training.
We use inflation variance $\delta^2=0.02$ and set all numerical stability constants $\epsilon$ in the advantage normalization and Eq.~\eqref{eq:prog_norm} to $10^{-9}$ for all experiments.

\paragraph{Utility estimation.}
We compute the advantage-derived learnability score $s_i^{(\mathrm{adv})}$ as the average step-level mean absolute group-normalized advantage over prompts from arm $i$ in the history window, and the reward-derived progress estimate in Eq.~\eqref{eq:reward_trend} uses the same window over the step-level mean reward of each arm.
When an arm has too few recent observations for a reliable trend fit, we set its fitted slope to zero before cross-task normalization, so that the \pac controller relies only on training-time rewards, advantages, and sampling counts, with validation scores reserved for reporting.
The window length $W$ sets the time scale of these estimates: a short window is overly sensitive to individual rollout batches, whereas a long window lags behind saturation or delayed improvement.
We use $W=16$ in all experiments, which balances smoothing single-batch reward noise against tracking non-stationary changes in task utility; the trend fit, cross-task slope normalization, and zero-slope fallback further reduce sensitivity to the exact window length.
Combined with the cold-start length above, this single window choice keeps the controller on a consistent time scale across settings, rather than tuning curriculum hyperparameters to validation benchmarks.

\subsection{Validation Performance at Individual Difficulty Levels}
\label{app:levelwise_validation}

The main multi-level evaluation in Table~\ref{tab:multi_difficulty_validation} focuses on the extremely hard split to measure transfer beyond the training difficulty range.
To test whether the improvement is confined to that split, Table~\ref{tab:levelwise_validation} reports final scores on held-out validation portions at all four difficulty levels, averaged across Countdown, Zebra, and ARC.
All validation examples are disjoint from the RL training data, including the simple, medium, and hard levels used during training.

\begin{table}[H]
\centering
\small
\setlength{\tabcolsep}{3.5pt}
\renewcommand{\arraystretch}{1.03}
\begin{tabularx}{\columnwidth}{@{}l*{4}{>{\centering\arraybackslash}X}@{}}
\toprule
\textbf{Method} & \textbf{Simple} & \textbf{Medium} & \textbf{Hard} & \textbf{Ext.} \\
\midrule
Random & 0.637 & 0.479 & 0.409 & 0.255 \\
SEC & 0.643 & 0.488 & \underline{0.413} & \underline{0.267} \\
DUMP & \underline{0.732} & \textbf{0.558} & 0.412 & 0.255 \\
\rowcolor{ourstableblue}\textbf{Ours} & \textbf{0.734} & \underline{0.552} & \textbf{0.466} & \textbf{0.302} \\
\bottomrule
\end{tabularx}
\caption{\textbf{Validation performance at individual difficulty levels in the multi-level setting.} Scores are measured after 500 training steps with Qwen2.5-3B and averaged across Countdown, Zebra, and ARC. ``Ext.'' denotes the extremely hard level.}
\label{tab:levelwise_validation}
\end{table}

Table~\ref{tab:levelwise_validation} shows that \pac preserves performance at lower difficulty while its largest gains emerge at higher difficulty.
Relative to the strongest baseline in each column, \pac is within 1.1\% on Simple and Medium, while improving Hard and Ext. by 12.8\% and 13.1\%, respectively.
This pattern indicates that the adaptive curriculum does not trade away performance on easier levels for out-of-distribution transfer; instead, it yields its clearest benefit where the reasoning problems are more difficult.

\subsection{History-Window Sensitivity}
\label{app:history_window_sensitivity}

We examine the sensitivity of \pac to the history-window length $W$, which determines the time scale of advantage smoothing in Eq.~\eqref{eq:adv_learnability} and reward-trend estimation in Eq.~\eqref{eq:reward_trend}.
We evaluate six window lengths, $W\in\{4,16,32,64,128,256\}$, in the nine-arm multi-level setting with Qwen2.5-3B, holding all other settings fixed.

\begin{table}[H]
\centering
\small
\setlength{\tabcolsep}{1.5pt}
\renewcommand{\arraystretch}{1.03}
\begin{tabularx}{\columnwidth}{@{}c*{4}{>{\centering\arraybackslash}X}@{}}
\toprule
\textbf{$W$} & \textbf{Countdown} & \textbf{Zebra} & \textbf{ARC} & \textbf{Avg.} \\
\midrule
4 & 0.245 & 0.240 & 0.228 & 0.238 \\
\rowcolor{ourstableblue}16 & 0.332 & 0.303 & 0.271 & 0.302 \\
32 & 0.326 & 0.298 & 0.264 & 0.296 \\
64 & 0.288 & 0.271 & 0.253 & 0.271 \\
128 & 0.252 & 0.248 & 0.232 & 0.244 \\
256 & 0.248 & 0.244 & 0.227 & 0.240 \\
\bottomrule
\end{tabularx}
\caption{\textbf{Sensitivity to the history-window length $W$.} We report validation accuracy after 500 training steps on the held-out extremely hard splits in the multi-level setting with Qwen2.5-3B. $W=16$ is the default.}
\label{tab:history_window_sensitivity}
\end{table}

Table~\ref{tab:history_window_sensitivity} reveals a trade-off between estimation noise and responsiveness.
\textbf{An overly short window is noisy.}
With $W=4$, the average drops by 21.2\% relative to the default (0.238 vs.\ 0.302).
This window averages the advantage signal over only four recent steps and fits the reward slope from only four task-level reward observations.
Because individual mixed batches can have noisy task rewards and few samples for some arms (Section~\ref{sec:reward_construction}), the controller can respond to transient fluctuations rather than sustained learning progress.

\textbf{An overly long window adapts too slowly.}
Increasing $W$ to 64, 128, and 256 reduces the average by 10.3\%, 19.2\%, and 20.5\%, respectively, relative to $W=16$.
Since task utility is non-stationary, a long window retains observations generated by substantially earlier policies.
Both the smoothed advantage estimate and the fitted reward trend can therefore lag behind task saturation or newly emerging progress, delaying the reallocation that \pac is designed to perform.

\textbf{\pac is stable near the default.}
$W=32$ reaches 0.296, only 2.0\% below the default 0.302, with similarly small differences on all three tasks.
Thus, $W=16$ provides the best balance between smoothing rollout noise and tracking changing task utility, while the result at $W=32$ shows that \pac is not narrowly tuned to a single window value.

\section{Ablation Study Details}
\label{app:ablation_details}

This appendix formalizes the three ablation variants used in Section~\ref{sec:ablation_study} and reports their per-task validation performance.
All variants share the same multi-domain training data, cold-start length $T_{\mathrm{cold}}$, history window $W$, and GRPO hyperparameters as the full \pac configuration.
They differ from the full model in exactly one place: either the utility observation that feeds the controller, or the rule used to convert utilities into rollout allocation.

\subsection{Variant Definitions}

\paragraph{Utility ablations: \textit{Ours w/o adv} and \textit{Ours w/o prog}.}
The two utility ablations leave the Gaussian posterior in Eq.~\eqref{eq:gaussian_posterior} and the Thompson Sampling controller of Section~\ref{sec:thompson} unchanged, and modify only the utility observation that drives the posterior update.
Recall from Eq.~\eqref{eq:multiplicative_fusion} that the full model uses the multiplicative fusion
\[
u_i^{(t)} \;=\; \bigl(1 + s_i^{(\mathrm{prog})}\bigr) \cdot s_i^{(\mathrm{adv})},
\]
in which $s_i^{(\mathrm{adv})}$ estimates the learnability factor $\LearnFactor{i}$ and $1+s_i^{(\mathrm{prog})}$ estimates the conversion factor $\ConvertFactor{i}$ in the factorization of Eq.~\eqref{eq:utility_factorization}.
The two utility ablations isolate each factor:
\begin{align}
u_{i,\text{w/o adv}}^{(t)} &\;=\; s_i^{(\mathrm{prog})}, \label{eq:ablation_no_adv}\\
u_{i,\text{w/o prog}}^{(t)} &\;=\; s_i^{(\mathrm{adv})}. \label{eq:ablation_no_prog}
\end{align}
Eq.~\eqref{eq:ablation_no_adv} drops the learnability factor and lets reward-derived progress drive allocation on its own.
Eq.~\eqref{eq:ablation_no_prog} drops the conversion factor and recovers an advantage-only utility, which is the same type of signal used by SEC and DUMP.

\paragraph{Controller ablation: \textit{Ours w/o ts}.}
The controller ablation leaves the fused utility of Eq.~\eqref{eq:multiplicative_fusion} unchanged and replaces only the allocation rule.
Concretely, \textbf{Ours w/o ts} discards the Gaussian posterior of Eq.~\eqref{eq:gaussian_posterior} and assigns each prompt in the rollout batch to arm $i$ according to a Boltzmann distribution over the current utility scores,
\begin{equation}
p_i^{(t)} \;=\; \frac{\exp\!\left(u_i^{(t)}\right)}{\sum_{j=1}^{N}\exp\!\left(u_j^{(t)}\right)}.
\label{eq:boltzmann_ablation}
\end{equation}
Under this rule, the controller does not model its own uncertainty: the precision-weighted posterior update in Eq.~\eqref{eq:posterior_update} is unused, and the variance inflation in Eq.~\eqref{eq:variance_inflation} is inactive.
Arms with larger utilities receive higher sampling probabilities, but the allocation is deterministic given the utility scores and cannot trade exploitation against posterior variance.

\subsection{Per-Task Results}

\begin{table*}[t]
\centering
\small
\setlength{\tabcolsep}{2pt}
\renewcommand{\arraystretch}{1.03}
\begin{tabularx}{\linewidth}{@{}l*{6}{>{\centering\arraybackslash}X}@{}}
\toprule
\textbf{Method} & \multicolumn{3}{c}{\textbf{Math}} & \textbf{Code} & \textbf{Logic} & \textbf{Avg.} \\
\cmidrule(lr){2-4}\cmidrule(lr){5-5}\cmidrule(lr){6-6}
& \textbf{MATH500} & \textbf{AMC22--23} & \textbf{AIME24} & \textbf{BigCodeBench} & \textbf{K\&K} & \\
\midrule
\textbf{Ours w/o adv} & 0.721 & \textbf{0.462} & 0.128 & 0.213 & \underline{0.856} & \underline{0.502} \\
\textbf{Ours w/o prog} & 0.535 & 0.387 & 0.135 & \underline{0.214} & 0.851 & 0.472 \\
\textbf{Ours w/o ts} & \underline{0.726} & \underline{0.450} & \underline{0.141} & 0.213 & 0.837 & 0.496 \\
\rowcolor{oursgray}\textbf{Ours} & \textbf{0.734} & 0.442 & \textbf{0.163} & \textbf{0.218} & \textbf{0.888} & \textbf{0.517} \\
\bottomrule
\end{tabularx}
\caption{\textbf{Per-task ablation results on the multi-domain reasoning setting with Qwen2.5-7B after 400 training steps.} "Avg." first averages the three math benchmarks and then takes the mean over Math, Code, and Logic.}
\label{tab:ablation_per_task}
\end{table*}

Table~\ref{tab:ablation_per_task} reports the final validation accuracy of each variant on the five multi-domain benchmarks.
Three patterns are consistent with the analysis in Section~\ref{sec:ablation_study}.
First, \textbf{Ours w/o prog} loses most ground on MATH500 and AMC22--23, where it undersamples the Math arm because the saturating K\&K arm continues to dominate an advantage-only utility; this behavior is analyzed in Appendix~\ref{app:curriculum_dynamics}.
Second, \textbf{Ours w/o adv} retains the progress signal and stays close to the full model on the easier benchmarks, but loses accuracy on AIME24 and BigCodeBench, the two splits where reward gains are slow and the progress estimator is noisier; the learnability term helps distinguish slow reward gains from noisy slopes in this regime.
Third, \textbf{Ours w/o ts} keeps the fused utility but records the lowest K\&K accuracy among the variants, because Boltzmann Sampling collapses the K\&K share once Math and Code utilities exceed it, and the resulting rule has no analogue of variance inflation to revisit the arm.

\section{Analysis of Curriculum Dynamics in Multi-Domain Training}
\label{app:curriculum_dynamics}

\begin{figure*}[t]
\centering
\includegraphics[width=0.95\linewidth]{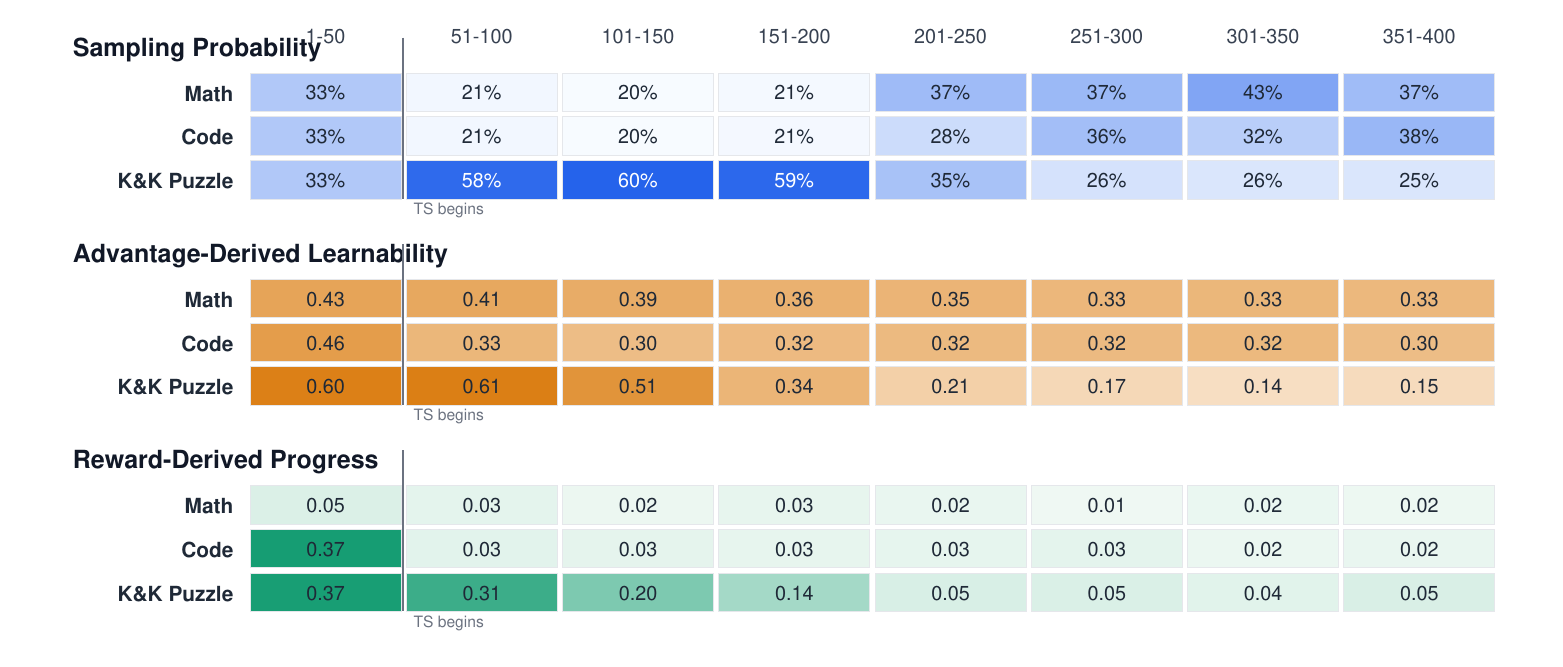}
\caption{\textbf{Training-time curriculum dynamics of \pac on the multi-domain setting with Qwen2.5-7B.} Columns report, for each task arm, the sampling probability, the advantage-derived learnability $s_i^{(\mathrm{adv})}$, and the reward-derived progress $s_i^{(\mathrm{prog})}$; rows correspond to the Math, Code, and K\&K Logic arms. Each cell aggregates the mean over a 50-step non-overlapping plotting window. The vertical marker indicates the end of the uniform cold-start stage at step 50, after which Thompson Sampling becomes the allocation rule.}
\label{fig:pac_group2_7b_heatmaps}
\end{figure*}

Figure~\ref{fig:pac_group2_7b_heatmaps} exposes the controller-internal trajectory of \pac during multi-domain training of Qwen2.5-7B: per-arm sampling probability, advantage-derived learnability $s_i^{(\mathrm{adv})}$, and reward-derived progress $s_i^{(\mathrm{prog})}$, each averaged within a 50-step plotting window. Reading the figure left to right shows how the rollout budget moves between arms; reading it across the two utility columns shows which signal is causing that movement. We use the figure for two purposes: to verify that \pac discovers a non-trivial curriculum once Thompson Sampling activates at step 50, and \textbf{to attribute the multi-domain validation gain in Table~\ref{tab:multi_domain_validation} to a concrete reallocation mechanism} rather than to unattributed optimizer stochasticity.

\subsection{Three-Phase Allocation Trajectory}

The sampling-probability column separates into three phases. The cold-start window allocates roughly a third of the rollout budget to each arm, so every arm accumulates comparable initial evidence before the posterior update in Eq.~\eqref{eq:posterior_update} starts to differentiate them. Once Thompson Sampling becomes active, the controller concentrates budget on K\&K Logic, whose share rises to about 60\% within roughly the next 100 training steps; during this phase K\&K Logic records both the highest advantage-derived learnability and the highest reward-derived progress, so the fused utility in Eq.~\eqref{eq:multiplicative_fusion} ranks it first under both signals simultaneously. After step 200, \textbf{the budget rotates back toward Math and Code, and the K\&K Logic share settles near one-quarter of the batch} for the remainder of training.

\subsection{Reward-Derived Progress Drives the Reallocation}

The K\&K Logic arm reveals why a controller needs more than advantage magnitude. Its advantage-derived learnability $s_i^{(\mathrm{adv})}$ decreases by roughly half over training but remains comparable to the other two arms, so an advantage-only scheduler would continue to over-allocate rollouts to K\&K Logic. The reward-derived progress signal $s_i^{(\mathrm{prog})}$ decays much more rapidly as the policy approaches the K\&K Logic reward ceiling, because further updates no longer convert into measurable reward gains. The multiplicative fusion in Eq.~\eqref{eq:multiplicative_fusion} couples the two signals, so \textbf{the collapse of progress pulls the fused utility down even while learnability remains nominally healthy}, and the controller redirects rollouts toward arms where additional updates still translate into reward. This is the failure mode that motivates the conversion factor $\ConvertFactor{i}$ in Section~\ref{sec:reward_construction}, and Figure~\ref{fig:pac_group2_7b_heatmaps} shows that \pac corrects it online, while training is still in progress, rather than only in retrospect.

\subsection{A Gradual Rather Than Abrupt Transition}

The Math and Code arms gain budget through the same factorization but with a different signal mix. For Math, both learnability and progress decay only slowly throughout training; for Code, the progress signal strengthens once its rollouts reliably pass unit-test verification. Both arms then offer a more favorable fused utility than the saturating K\&K Logic arm, and Thompson Sampling raises their posterior means accordingly. The transition is gradual rather than abrupt because \textbf{the variance inflation in Eq.~\eqref{eq:variance_inflation} preserves posterior uncertainty and prevents premature exploitation} of the arm with the largest current utility estimate. The same mechanism explains why the K\&K Logic share stabilizes near a quarter of the batch instead of collapsing to zero: with inflated variance, the controller keeps K\&K Logic available for periodic resampling, so that it can still detect any later utility shift on that arm.

\subsection{From Reallocation to the Multi-Domain Validation Gain}

The mid-training reallocation provides a mechanistic account of the multi-domain trends in Section~\ref{sec:results_discussion}. The \pac validation curve in Figure~\ref{fig:main_results_multi_domain} begins to separate from DUMP after the mid-training phase, and this separation coincides with the window in which the controller redirects rollout budget away from the saturating K\&K Logic arm and toward Math and Code. Reward-derived progress is therefore the decisive signal in the regime where advantage magnitude alone would mislead the controller, and \textbf{the resulting reallocation accounts for the 6.2\% cross-domain average improvement of \pac over advantage-only baselines} reported for Qwen2.5-7B in Table~\ref{tab:multi_domain_validation}.

\raggedbottom
\section{Prompt Examples}
\label{app:prompt_examples}

This appendix shows representative prompts from the datasets used in the two experimental settings.
To keep the examples readable, we include the task-specific user content and omit the shared conversation preamble and assistant cue.
We also report the reward target for each example to clarify how automatic verification is performed; these targets are not part of the model input.

\subsection{Setting 1: Multi-Level Reasoning}

\begin{promptbox}{Countdown Prompt}
\small
\textbf{Prompt:}

Using the numbers \(21, 95, 8, 2, 26, 2\) and basic arithmetic operations \((+, -, \times, /)\), create an expression that equals \(30\).
You need to use each number exactly once.
Present the final expression that equals the target number within \texttt{\textbackslash boxed\{\}}.
For example: \texttt{\textbackslash boxed\{46+68/(52-50)\}}.

\textbf{Reward target:} The expression must use \(21, 95, 8, 2, 26, 2\) exactly once and evaluate to \(30\).
\end{promptbox}

\begin{promptbox}{Zebra Prompt}
\small
\textbf{Prompt:}

This is a logic puzzle.
There are \(4\) houses, numbered \(1\) on the left to \(4\) on the right, from the perspective of someone standing across the street from them.
Each house has a different person, and each person has different characteristics:
\begin{itemize}
    \item Each person has a unique name: carol, arnold, alice, bob.
    \item Everyone has a favorite smoothie: butterscotch, desert, darkness, dragonfruit.
    \item They all have a different favorite flower: iris, daffodils, lilies, carnations.
    \item Everyone has something different for lunch: grilled cheese, stir fry, pizza, soup.
\end{itemize}

\begin{enumerate}
    \item Bob is the person who loves stir fry.
    \item The person who loves eating grilled cheese is directly left of the person who loves a carnations arrangement.
    \item The person who loves a carnations arrangement and the person who loves a bouquet of daffodils are next to each other.
    \item The person who loves the soup is directly left of the Butterscotch smoothie drinker.
    \item The Darkness smoothie drinker is directly left of the Desert smoothie lover.
    \item The person who loves the soup is the Desert smoothie lover.
    \item The person who loves a carnations arrangement is Bob.
    \item The person who loves the bouquet of lilies is the Butterscotch smoothie drinker.
    \item The Desert smoothie lover is Carol.
    \item The person who loves the bouquet of iris is Arnold.
\end{enumerate}

What is the name of the person who lives in House \(1\)?
Provide only the name as the final answer and put it in \texttt{\textbackslash boxed\{\}}.

\textbf{Reward target:} \texttt{arnold}.
\end{promptbox}

\begin{promptbox}{ARC Prompt}
\small
\textbf{Prompt:}

Find the common rule that maps an input grid to an output grid, given the examples below.

\textbf{Example 1:}\\
Input: \texttt{0 0 0 9 3 6 0 0 0 0}\\
Output: \texttt{9 3 6 0 0 0 0 0 0 0}

\textbf{Example 2:}\\
Input: \texttt{0 0 0 0 0 0 0 0 2 8}\\
Output: \texttt{0 0 0 0 0 2 8 0 0 0}

\textbf{Example 3:}\\
Input: \texttt{0 0 0 0 0 0 0 9 0 0}\\
Output: \texttt{0 0 0 0 9 0 0 0 0 0}

Below is a test input grid.
Predict the corresponding output grid by applying the rule you found.
Describe how you derived the rule and your overall reasoning process before submitting the answer.
The final answer must be placed in \texttt{\textbackslash boxed\{\}} and should be just the test output grid itself.

\textbf{Input:}\\
\texttt{1 6 7 8 9 4 0 0 0 0}

\textbf{Reward target:} \texttt{8 9 4 0 0 0 0 1 6 7}.
\end{promptbox}

\subsection{Setting 2: Multi-Domain Reasoning}

\begin{promptbox}{Math Prompt}
\small
\textbf{Prompt:}

Every morning Aya goes for a \(9\)-kilometer-long walk and stops at a coffee shop afterwards.
When she walks at a constant speed of \(s\) kilometers per hour, the walk takes her \(4\) hours, including \(t\) minutes spent in the coffee shop.
When she walks \(s+2\) kilometers per hour, the walk takes her \(2\) hours and \(24\) minutes, including \(t\) minutes spent in the coffee shop.
Suppose Aya walks at \(s+\frac{1}{2}\) kilometers per hour.
Find the number of minutes the walk takes her, including the \(t\) minutes spent in the coffee shop.
Put the final answer within \texttt{\textbackslash boxed\{\}}.

\textbf{Reward target:} \(\boxed{204}\).
\end{promptbox}

\begin{promptbox}{Code Prompt}
\small
\textbf{Prompt:}

You are a helpful programming assistant.
The reasoning process and answer are enclosed within \texttt{<think>...</think>} and \texttt{<answer>...</answer>} tags, respectively.

\textbf{User:}
Generate a random string of the specified length composed of uppercase and lowercase letters, and then count the occurrence of each character in this string.
The function should raise \texttt{ValueError} if the length is a negative number.
The function should output:

\begin{quote}
\small
\texttt{dict}: a dictionary where each key is a character from the generated string and the value is the count of how many times that character appears in the string.
\end{quote}

You should write self-contained code starting with:
\begin{quote}
\small\ttfamily
import collections\\
import random\\
import string\\
def task\_func(length=100):
\end{quote}

\textbf{Reward target:} Unit tests validate the implementation of entry point \texttt{task\_func}.
\end{promptbox}

\begin{promptbox}{K\&K Logic Prompt}
\small
\textbf{Prompt:}

A very special island is inhabited only by knights and knaves.
Knights always tell the truth, and knaves always lie.
You meet \(6\) inhabitants: Grace, Penelope, Mia, Noah, Matthew, and Lily.
Grace said, ``Noah is a knave.''
Penelope asserted, ``Mia is a knight.''
Mia said, ``Grace is a knave.''
Noah said, ``Mia is a knight if and only if Matthew is a knight.''
Matthew said that Noah is not a knave.
Lily said, ``Mia is not a knight.''
So who is a knight and who is a knave?

\textbf{Reward target:}
\begin{quote}
\small
(1) Grace is a knave\\
(2) Penelope is a knight\\
(3) Mia is a knight\\
(4) Noah is a knight\\
(5) Matthew is a knight\\
(6) Lily is a knave
\end{quote}
\end{promptbox}

\end{document}